\documentclass{article}

\PassOptionsToPackage{numbers,sort&compress}{natbib}
\usepackage[preprint]{neurips_2026}
\makeatletter
\renewcommand{\@noticestring}{Preprint. Accepted at the 9th Workshop on
Machine Learning and the Physical Sciences (ML4PS 2026).}
\makeatother

\usepackage[utf8]{inputenc}
\usepackage[T1]{fontenc}
\usepackage{hyperref}
\hypersetup{hidelinks}
\usepackage{url}
\usepackage{booktabs}
\usepackage{amsmath,amssymb,amsthm}
\usepackage{microtype}
\usepackage{graphicx}

\newtheorem{theorem}{Theorem}

\theoremstyle{remark}

\newcommand{\R}{\mathbb{R}}
\newcommand{\Od}{O(d)}

\newcommand{\ip}[2]{\langle #1,\, #2\rangle}
\newcommand{\norm}[1]{\lVert #1\rVert}
\DeclareMathOperator{\tr}{tr}

\title{Attention as Lattice Gauge Theory:\\
Relations Interact Only Through Wilson Loops}

\author{%
  Aakash Sarkar\\
  University of California, San Francisco\\
  \texttt{aakash.sarkar@ucsf.edu}
  \And
  Ananya Mukherjee\\
  Brigham and Women's Hospital\\
  \texttt{amukherjee8@bwh.harvard.edu}
  \And
  Motoshige Sato\\
  University of California, San Francisco\\
  \texttt{motoshige.sato@ucsf.edu}
}

\begin{document}

\maketitle

\begin{abstract}
Relations compose through the entity they share---Alice is Bob's
mother and Bob is Carol's father, so Alice is Carol's
grandmother---and the order matters, since a father's mother is not
a mother's father; verifying a composed relation means closing the
loop. We show that if each entity's coordinate frame may
be rotated independently (per-entity frame freedom~\cite{p1}), three
things follow. In the pure-gauge sector, relations that share no
cycle cannot interact at all.
The only gauge-invariant score of a triangle is a class function of
its holonomy $H{=}T_{ij}T_{jk}T_{ki}$---a Wilson loop. And the only
equivariant update of an edge is $U{=}\Psi(H)\,T_{ij}$ with $\Psi$
conjugation-equivariant. Edge attention by Wilson loops is therefore
not one design among many but the unique form the symmetry permits.
Three communities hold pieces of this form: the commuting-operator
embeddings of knowledge graphs~\cite{sun2019,trouillon2016} are its
abelian (commuting) sector; AlphaFold's triangle update~\cite{jumper2021} and
the Edge Transformer~\cite{bergen2021} have its shape---an edge
updated through the third vertex---without its constraints; and
lattice QCD builds it equivariantly on
lattices~\cite{kanwar2020,nagai2025}. We instantiate the equivariant
sub-family on general relational graphs as \emph{holonomy
attention}. With no training and composition alone, it infers
unobserved $S_4$ relations on sparse graphs at $0.88$ with a fifth
of the edges corrupted, at or slightly below specialist
synchronization algorithms, where every commutative architecture is
capped at $0.083$; its holonomy score flags corrupted triangles at
AUC $0.99$ where commuting scores are capped at $0.76$ by a second
selection rule, and it separates clusters without labels. \end{abstract}

\section{Frame freedom and the transport sector}
\label{sec:setting}

In a lattice gauge theory, matter fields live at sites and gauge
fields on links; comparing matter across sites requires parallel
transport, and the gauge field interacts only around plaquettes, the
elementary closed loops~\cite{wilson1974}. We show this structure is
\emph{forced} in attention: per-entity frame freedom requires
transported scores between entities and plaquette-mediated
interaction between relation operators. The dictionary is direct: sites map to entities, link variables
to relation transports $T_{ij}$, plaquette holonomies to Wilson loops
of relation triangles, flat connections to standard
attention~\cite{p1}, and the gauge self-coupling to the forced form
proved here.

\begin{figure}[t]
\centering
\includegraphics[width=\textwidth]{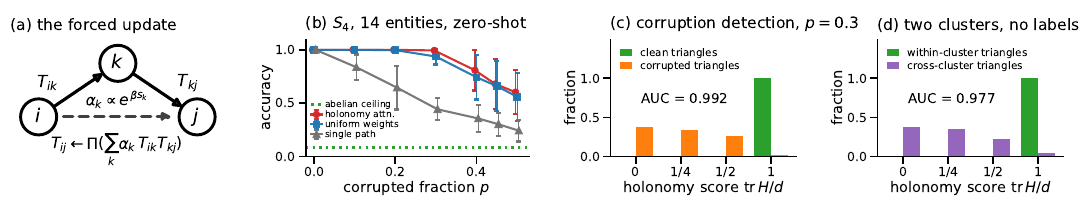}
\caption{Holonomy attention, zero-shot. (a)~The forced update.
(b)~Accuracy on unobserved $S_4$ relations versus corrupted fraction
(14 entities, degree~8, four rounds; mean and sd, $10$ instances).
(c)~Holonomy score $\tr H/d$ of observed triangles at $p{=}0.3$
(discrete: permutation traces are integers). (d)~The same score
separates within- from cross-cluster triangles with no labels.}
\label{fig:gluon}
\end{figure}
 \textbf{Holonomy and frame freedom.} Transporting a vector around
a closed path multiplies it by the \emph{holonomy}, the net rotation
the loop accumulates; its trace, the \emph{Wilson loop}, is a number
independent of the coordinate system at every vertex. Entities introduced on the fly---a name in a story, a variable in a
program---have no intrinsic coordinate system: rotating their
embedding alone does not change the task. A
recent study~\cite{p1} proves that under this $\Od^{\,n}$ symmetry
any admissible attention score depends on the entities only through
$x_i$ and $T_{ij}x_j$ (for isometries, through
$\ip{x_i}{T_{ij}x_j}$), and that because any equivariant map is
rank one, the transport cannot be computed from the two entities and
must be supplied by relation data outside the pair. \emph{The symmetry is the task's, not a network's}: a fresh entity's
frame carries no information, so any predictor must be invariant to
it; a shared-basis model meets that only by learning transports.
\textbf{The flat limit.} When every loop's holonomy is trivial, a
shared frame exists and the forced score reduces to plain dot-product
attention; a standard head realizes exactly this form with a constant
per-head transport composed only through depth, so standard attention
is the flat-connection limit of a richer theory~\cite{p1}.

\textbf{Prior work.} On lattices the form is known: gauge-equivariant
flows, smearing, Wilson-loop attention, and gauge-covariant message
passing all realize conjugation-equivariant plaquette
updates~\cite{kanwar2020,boyda2021,nagai2021,nagai2025,rayat2026},
and such layers are known to suffice~\cite{favoni2022,luo2021}. In
synchronization, holonomy classifies edge potentials~\cite{gao2019}
and cycle inconsistency is the standard
diagnostic~\cite{zach2010,lerman2022}. In ML more broadly, gauge
structure appears through frames on meshes~\cite{cohen2019,he2021},
operator readings of attention~\cite{lin2026}, orthogonal state
models~\cite{sung2026}, and sheaf
networks~\cite{hansen2020,bodnar2022,barbero2022}, none with
interaction between edge maps. What we add is the derivation from
per-entity frame freedom on general relational graphs---the
closed-form $\Od$ statement, where the reflections matter---its
reading across three literatures, and the measurements below. Each edge carries
$T_{ij}{\in}\Od$ with $T_{ji}{=}T_{ij}^{-1}$, and a gauge
transformation (an independent change of coordinates at each entity)
acts by $T_{ij}{\mapsto}g_iT_{ij}g_j^{-1}$.


\section{Forced plaquettes between relations}
\label{sec:forced}

Everything below follows from one move: gauge-fix along a spanning
tree, and see what freedom is left. The sketches here are complete enough
to check; Appendix~B has the details.

\textbf{No cycle, no interaction.} If a set of edges contains no
cycle, gauge-fixing along its spanning tree sends every transport
to $I$. Any function of the transports invariant under $\Od^{V}$
(one rotation per vertex) is therefore constant. An equivariant
update of an edge must, at the identity, commute with the residual
diagonal $\Od$, so by Schur it is a scalar, $\pm I$; undoing the
gauge gives $U{=}\pm T_{ij}$. In the pure-gauge sector the edge cannot be moved by the others
(with entity vectors, $x_ix_j^{\top}$ is an equivariant edge term
with no cycle). Two edges
sharing a vertex but no closing edge form a tree, so no ``global
edge attention'' respects the symmetry: relation operators interact
only around cycles. What cycle-mediated interaction looks like is then the question,
and the answer is unique.

\textbf{Plaquette invariants are Wilson loops.} Let
$(T_{ij},T_{jk},T_{ki})$ be the transports of a triangle and
$H{=}T_{ij}T_{jk}T_{ki}$ its holonomy. Setting $g_i{=}I$,
$g_j{=}T_{ij}$, $g_k{=}T_{ij}T_{jk}$ sends the triple to $(I,I,H)$,
and the only freedom left is the diagonal, $H{\mapsto}gHg^{-1}$. The
orbits are therefore the conjugacy classes of $H$, and the
continuous invariants are exactly the \emph{class functions} of
$H$: functions with $f(gHg^{-1}){=}f(H)$, which see the eigenvalues
and not the basis, generated by $\tr H^r$,
$r{\le}d$~\cite{procesi1976}. The reflections are load-bearing---in
$SO(2)$ the rotations $R(\pm\theta)$ share a characteristic
polynomial but are not conjugate---and the same argument holds for
a cycle of any length $m$; for several cycles the invariants are
functions of $(H_1,\ldots,H_b)$ under simultaneous conjugation. On a
graph this is the classification of~\cite{gao2019}; on a lattice it
is loop reconstruction~\cite{giles1981}.

\begin{theorem}[Edge-level forced form]
\label{thm:update}
Every continuous equivariant update of $T_{ij}$ from a cycle
through it, with holonomy $H$ based at $i$, is
$U{=}\Psi(H)\,T_{ij}$ with $\Psi(gHg^{-1}){=}g\,\Psi(H)\,g^{-1}$,
and $\Psi$ in the closure of $\{\sum_r c_r(H)\,H^r\}$,
$c_r$~class functions, intersected with~$\Od$.
\end{theorem}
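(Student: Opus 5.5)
The plan is to reuse the spanning-tree gauge fixing from the plaquette argument and reduce the edge-update question to a question about maps on the holonomy alone. Write the cycle as $i\to j\to\cdots\to i$ with transports $(T_{ij},T_{jk},\ldots)$. An update is a continuous map $F$ from the cycle's transports to $\Od$ with $F(g\cdot T)=g_i\,F(T)\,g_j^{-1}$, because the output must transform like $T_{ij}$. Because $T_{ij}$ is invertible, I would set $\Psi:=F\,T_{ij}^{-1}$, a map into $\Od$ that transforms as $\Psi\mapsto g_i\Psi g_i^{-1}$. It then suffices to show that $\Psi$ depends on the transports only through $H$, and to identify the admissible maps $H\mapsto\Psi(H)$.

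\textbf{Step 1 (reduction to $H$).} Gauge-fix exactly as in the plaquette paragraph: take $g_i=I$, $g_j=T_{ij}$, $g_k=T_{ij}T_{jk}$, and so on along the path. This sends every transport except the closing one to $I$, and the closing one to $H$ based at $i$. Since $g_i=I$, $\Psi$ is unchanged by this move, so $\Psi(T)=\Psi(I,\ldots,I,H)=:\Psi(H)$. The residual freedom is the diagonal $g_i=g_j=\cdots=g$, under which $H\mapsto gHg^{-1}$ and $\Psi\mapsto g\Psi g^{-1}$. This yields the equivariance condition $\Psi(gHg^{-1})=g\Psi(H)g^{-1}$. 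Undoing the gauge gives $U=\Psi(H)\,T_{ij}$. The converse is direct: any such $\Psi$ gives an equivariant $U$, because $H$ based at $i$ transforms by $g_i$-conjugation.

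\textbf{Step 2 (structure of $\Psi$).} Fix $H$ and let $S=\{g\in\Od: gHg^{-1}=H\}$ be its stabilizer. Equivariance forces $\Psi(H)$ to commute with all of $S$. The claim is that the commutant of the stabilizer, intersected with matrices, is the algebra $\R[H]$ of polynomials in $H$; this is where reflections are load-bearing. $H$ is orthogonal, hence normal, so it block-diagonalizes into $\pm1$ eigenspaces and $2\times2$ rotation blocks $R(\theta_a)$, with equal angles grouped. The stabilizer contains $O(m)$ on each real eigenspace. On each rotation isotypic component it contains the unitary group acting on the complex structure, and in $\Od$ it also contains elements mixing blocks of the same angle. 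The commutant is then scalars on the $\pm1$ eigenspaces and multiples of $aI+bJ$ on each angle component, with $J$ the block complex structure. Multiples of $aI+bJ$ are exactly polynomials in $H$ restricted to that component, by interpolation over the distinct eigenvalues. Hence $\Psi(H)=p_H(H)$ for some polynomial $p_H$ of degree $<d$. Equivariance makes the coefficients constant on conjugacy classes, so $c_r$ are class functions. Requiring $\Psi(H)\in\Od$ gives the intersection with $\Od$.

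\textbf{Main obstacle.} The hard part is the passage from pointwise interpolation to globally continuous coefficients. Where eigenvalues collide, the interpolation coefficients blow up, although $\Psi$ itself stays continuous. This is why the statement says "closure". I would handle it by working first on the dense open set of regular $H$, meaning distinct eigenvalue pairs. There the commutant is exactly $\R[H]$, and the coefficients $c_r$ are continuous class functions obtained by Lagrange interpolation on eigenvalues. Continuity of $\Psi$ then places it in the closure of $\{\sum_r c_r(H)H^r\}$ on all of $\Od$. I would also check the stabilizer-commutant claim carefully on the degenerate angle components. There the complex-linear-versus-real distinction must be settled using the reflections available in $\Od$, as opposed to $SO(d)$.
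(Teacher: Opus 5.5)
Your Step~1 is the paper's gauge fixing: your $\Psi:=F\,T_{ij}^{-1}$ coincides with the paper's $\Psi(H):=U(I,I,H)$, and it handles cycles of any length directly. Step~2 takes a genuinely different route.

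The paper classifies $\Psi$ by invariant theory. It uses Procesi's description of $\Od$-equivariant polynomial maps as words in $H,H^\top$ over trace invariants; these collapse to powers of $H$ since $H^\top=H^{-1}$. It then applies Stone--Weierstrass on the compact group, leaving implicit the Haar averaging that keeps the approximants equivariant.

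You argue pointwise instead: $\Psi(H)$ commutes with the centralizer $S$ of $H$ in $\Od$, and the commutant of $S$ is exactly $\R[H]$. This is correct. The block-diagonality you use silently holds because $H\in S$. The check you flag is needed only on the $\pm1$ eigenspaces. There $SO(d)$ can leave just an $SO(2)$ with commutant $\{aI+bJ\}$; for example, $\Psi\equiv J$ is $SO(2)$-equivariant but $J\notin\R[I]$. On the rotation components, $U(m_\theta)\subset SO(2m_\theta)$ already gives $\{aI+bJ\}$.

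The two routes buy different things:
\begin{itemize}
\item Your route gives an elementary proof of the exact statement $\Psi(H)\in\R[H]\cap\Od$ at every $H$, and it shows precisely where the reflections enter.
\item The paper's route gives the uniform version: $\Psi$ is a uniform limit of $\sum_r c_r(H)H^r$ with polynomial class functions $c_r$. The cost is citing Procesi.
\end{itemize}

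Your final sentence does not yet deliver that uniform version. A continuous extension from the regular set of an expression whose coefficients blow up is not automatically a uniform limit of such expressions with globally continuous $c_r$.

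The gap closes within your own framework. Take the ridge coefficients $c^\delta=(G+\delta I)^{-1}b$, with $G_{rs}=\tr H^{s-r}$ and $b_r=\tr(H^{-r}\Psi(H))$. These are continuous class functions. Because $\Psi(H)\in\R[H]$ pointwise, the residual $\norm{\Psi(H)-\sum_r c^\delta_r(H)H^r}$ decreases monotonically to $0$ as $\delta\downarrow0$. Dini's theorem on the compact group then makes the convergence uniform.
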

\noindent\emph{Proof intuition.} In the gauge $(I,I,H)$ set
$\Psi(H){:=}U(I,I,H)$. Equivariance under the diagonal gives
$\Psi(gHg^{-1}){=}g\Psi(H)g^{-1}$, and returning to the original
gauge supplies the prefactor $T_{ij}$. On $\Od$, $H^\top{=}H^{-1}$,
so words in $H$ and $H^\top$ reduce to integer powers
(the polynomial case is~\cite{procesi1976}); continuous equivariant
maps are limits of polynomial ones. The same gauge fixing gives, for
several cycles through the edge, $U{=}\Psi(H_1,\ldots,H_b)\,T_{ij}$
under simultaneous conjugation---with a candidate $C_1$ in place of
$T_{ij}$ for an absent edge---which is the case run below.

\textbf{Instances.} Setting $\Psi{=}H^{-1}$ gives
$U{=}T_{ik}T_{kj}$: composition through the third vertex. With
several third vertices and class-function scores
$s_k{=}\phi(\tr H_k,\ldots)$, the forced aggregate is
$U{=}\Pi\!\big(\sum_k\alpha_k\,T_{ik}T_{kj}\big)$ with $\Pi$ the
polar projection to $\Od$ (the nearest orthogonal matrix). Since
$\Pi$ is equivariant wherever the sum is invertible, the aggregate stays in the forced family, with
$\Psi$ a function of the several holonomies. Commuting operators are
simultaneously diagonalizable, so their composition is a
coordinate-wise product: the abelian sector is the commuting-operator
knowledge-graph embeddings---RotatE, ComplEx,
DistMult~\cite{sun2019,trouillon2016,yang2015}---whose commutative
composition DihEdral~\cite{xu2019} escapes with a finite non-abelian
group at the node level; none has an edge-level update. In this sector $H{=}e^{\Phi}$, $\Phi{=}\sum A$, so edges couple only
linearly, through the synchronization
Laplacian~\cite{singer2011,bandeira2013}, as photons do not interact
with each other; the Wilson action's commutator term (Appendix~B) is
the self-interaction that makes the non-abelian sector gluon-like,
and this paper's subject.

The Edge Transformer's and AlphaFold's triangle updates have the
form's shape---an edge updated by a weighted sum over third vertices
of a bilinear message from the other two edges---but none of its
constraints: states are not transports, the bilinear map is
unconstrained (with $d^3$ channels it contains matrix
multiplication), and scores are not invariant. They are the unconstrained superset, the forced form its equivariant
sub-family. A design rule follows: one cycle through an edge has the same
holonomy whichever of its edges is wrong, so one loop detects but
cannot localize; two triangles do, via the square $\tr(H_kH_l^{-1})$
that avoids the edge---score a candidate only by loops that avoid it,
as the Edge Transformer's logit already does.

The update is APE smearing~\cite{albanese1987} with two-link staples
(the paths around a link) and Wilson-loop
weights~\cite{nagai2021,nagai2025}, known on lattices since 1987;
what we add is the proof that nothing else exists on general graphs,
the reading that unifies three literatures, and two selection rules
that the form predicts and the experiments confirm.

\section{The non-abelian sector: holonomy attention}
\label{sec:gluon}

We instantiate the non-abelian sector as \emph{holonomy
attention}: an edge-level attention mechanism whose scores are
Wilson loops and whose messages are compositions through shared
vertices, projected back to~$\Od$ (Figure~\ref{fig:gluon}a). Load the group's faithful
permutation representation ($d{=}4$) as a library, run four rounds
of $T_{ij}\!\leftarrow\!\Pi\!\big(\sum_k\alpha_k\,T_{ik}T_{kj}\big)$
with $\alpha_k{\propto}\exp(\beta s_k)$, $s_k$ the mean over $l{\neq}k$
of $\tr(C_kC_l^{-1})/d$, $\beta{=}8$ (a five-fold sweep moves the
routing gain by $\le3$ points), and snap to the nearest group
element in Frobenius norm.

Observed edges are held fixed; an absent edge is scored by the
mutual consistency $\tr(C_kC_l^{-1})$ of its candidates
$C_k{=}T_{ik}T_{kj}$, and $C_kC_l^{-1}{=}T_{ik}T_{kj}T_{jl}T_{li}$
is the holonomy of the square $i{\to}k{\to}j{\to}l{\to}i$, a loop
that avoids the edge, as the design rule requires. Scoring an edge
instead by a triangle that contains it is self-confirming and costs
$5$--$12$ points (Table~\ref{tab:results}, ``literal''). No
parameters are learned; code is released with the paper.

We test it on group synchronization---recovering frames from noisy
pairwise relations---with outliers: fourteen
entities carry hidden frames in $S_4$, the $24$ permutations of
four objects; a random set
of pairs of fixed size, resampled until connected, is observed;
each observed relation is replaced with probability $p$ by a
different uniform random element, with $T_{ji}{=}T_{ij}^{-1}$
kept; the task is the
relation between an unobserved pair, a standard problem in cryo-EM
and sensor calibration~\cite{singer2011}. Holonomy attention is not a better synchronizer: Singer's
eigenvector method~\cite{singer2011} ties it and edges it by $0$--$5$
points with one reweighting pass, and cycle-edge message
passing~\cite{lerman2022} is ahead on a matched $n{=}12$ pilot by $3$, $3$, $0$, $11$ points at $p{=}0.1$,
$0.2$, $0.3$, $0.45$ (standard error $\approx0.09$). The numbers
test the theory, in three results (Table~\ref{tab:results}).

\begin{table}[t]
\centering\footnotesize
\setlength{\tabcolsep}{3.4pt}
\begin{tabular}{@{}llcccccc@{}}
\toprule
\textbf{Protocol} ($S_4$) & $p$ & \textbf{single path} & \textbf{uniform} & \textbf{holonomy attn.} & \textbf{literal} & \textbf{spectral} & \textbf{spectral+1} \\
\midrule
sparse, deg.~5 & $0.2$ & $0.625$ & $0.872$ & $0.886$ & $0.753$ & $0.897$ & $0.912$ \\
dense, deg.~8 & $0.2$ & $0.645$ & $0.982$ & $0.988$ & $0.935$ & $0.987$ & $0.990$ \\
dense, deg.~8 & $0.3$ & $0.488$ & $0.912$ & $0.937$ & $0.820$ & $0.933$ & $0.949$ \\
dense, deg.~8 & $0.45$ & $0.309$ & $0.623$ & $0.700$ & $0.559$ & $0.678$ & $0.753$ \\
\bottomrule
\end{tabular}
\vspace{2pt}
\caption{Accuracy on unobserved pairs, four rounds, $d{=}4$, $1000$ instances per cell, released harness, seed 2026 (95\% CI $\le\pm0.014$); paired holonomy$-$uniform: $+0.014\pm0.004$, $+0.006\pm0.002$, $+0.026\pm0.004$, $+0.078\pm0.006$. ``Literal'' scores an edge by a triangle containing it; abelian ceiling $0.083$. Figure~\ref{fig:gluon}b--d: earlier implementation, $10$--$15$ instances.}
\label{tab:results}
\end{table}

\textbf{Composition is the capability.} On sparse $S_4$ graphs at
$p{=}0.2$ (Table~\ref{tab:results}) holonomy attention reaches
$0.89$ where the single path reaches $0.63$, and composition through
cycles is $26$ to $45$ points above the single path on every
row; any architecture confined to the abelian sector---the
commuting-operator embeddings---is capped at
$|S_4^{\mathrm{ab}}|/|S_4|{=}0.083$, the fraction of a group that
commuting operators distinguish (the kernel argument of~\cite{p1}),
and a character-level composition measures exactly that ($0.083$).

\textbf{Routing is the refinement.} On the dense graphs of
Figure~\ref{fig:gluon}b holonomy scores and uniform weights coincide
at low corruption; the uniform mean bends where clean two-hop paths
become a minority, $(1{-}p)^2{=}\tfrac12$ at $p{\approx}0.29$, and
the paired gain opens there, $+0.026\pm0.004$ at $p{=}0.3$ and
$+0.078\pm0.006$ at $0.45$; iterated reweighting does better still.

\textbf{The holonomy score is the invariant statistic.} Cycle
inconsistency is the classical corruption
statistic~\cite{zach2010,lerman2022}; the forced form says it is the
\emph{only} gauge-invariant one, and it obeys a second selection
rule. $\tr H/d$ separates corrupted from clean triangles with area
under the ROC curve (AUC) $0.99$: clean triangles score exactly $1$,
one corrupted edge never closes a loop, so the only ties are
triangles with two or more cancelling corruptions ($1.4\%$ in closed
form at $p{=}0.3$), and the AUC sits at its ceiling $0.993$. A commuting score sees only
$G^{\mathrm{ab}}{=}G/[G,G]$, so with one corrupted edge its AUC is capped at
$\tfrac12\big(1+(|G|-|[G,G]|)/(|G|-1)\big)$: $0.80$ for $S_3$, $0.93$
for $D_4$ and $Q_8$, $0.86$ for $A_4$, $0.76$ for $S_4$ (measured
$0.759$ against $0.993$), and chance for $A_5$---the commutator
subgroup of~\cite{p1} capping detection as it caps composition. On a two-cluster graph with random cross-cluster
relations it separates within- from cross-cluster triangles at AUC
$0.98$ (Figure~\ref{fig:gluon}d), with no group labels: unsupervised community detection
from non-abelian frustration; structural balance in signed
networks~\cite{cartwright1956} is the $\det H$ sign product; $\tr H^r$
carries the loop-level content.

\section{Discussion}
\label{sec:disc}

The node-level and edge-level forced forms are two halves of one
architecture. The node step---entity vectors updated by transported messages,
scored by $\ip{x_i}{T_{ij}x_j}$---is the matter propagator: content
flows through a fixed gauge field, the operator resolvent
$(I{-}\gamma\mathcal T)^{-1}$ closing all paths at once~\cite{p1}, the
gauge lift of the random-walk resolvent that word embeddings on
graphs factorize~\cite{qiu2018,levy2014}. The operators never touch each other. The edge
step proved here is the gauge propagator: relation operators
compose through shared vertices, scored by Wilson loops, and refine
each other---including for pairs never observed. A full layer
alternates the two: matter propagates in a background field, the
field updates itself, matter propagates again in the refined field.
The resolvent after holonomy attention is exact exactly when the
effective connection is flat on the attention support, which is
what the routing works toward. 

The next step is making the gauge field dynamical: operators
generated from context rather than retrieved from a library. The
rank-one result of~\cite{p1} already constrains the generator---the
transport cannot come from the two entities, so it must come from
relation data the context supplies outside the pair---and the
forced form constrains how generated operators interact: through
plaquettes, by Wilson-loop scores, in the form $\Psi(H)\,T_{ij}$
and no other. The same reading predicts something about a model
everyone has used: AlphaFold's pair stack is learned flattening,
its triangle updates driving pairwise geometry toward zero plaquette
holonomy, so the pair representation's triangle inconsistency should
fall with depth and recycling. The non-abelian sector belongs where order is physically attested:
multi-plane lensing, where image rotation is a non-commutative
Wilson line of shears~\cite{schneider1992,krause2010} that commutative
emulators miss ($0.51$ against $0.92$; Appendix~B); cryo-EM
orientation recovery; kinematic chains. Relations compose through the entity they share,
and the composition is forced; what remains is learning the operators
themselves. 

\bibliographystyle{unsrtnat}

\appendix
\newpage
\section{Forced Form and Flat Limit (from~\cite{p1})}

\paragraph{Forced Form.}
Let entities $i,j$ carry $x_i,x_j{\in}\R^d$ with independent
frames $x_i{\mapsto}g_ix_i$, $x_j{\mapsto}g_jx_j$, and let a
transport $T_{ij}$ transform as
$T_{ij}{\mapsto}g_iT_{ij}g_j^{-1}$. Any continuous invariant
score depends on the entities only through $x_i$ and $T_{ij}x_j$;
for isometries, $s{=}\phi(\norm{x_i},\norm{x_j},\ip{x_i}{T_{ij}x_j})$.
Any equivariant message lies in $\mathrm{span}\{x_i,T_{ij}x_j\}$.
No fixed bilinear $x_i^\top Mx_j$ is invariant ($g_j{=}I$,
$g_i{=}{-}I$ forces $M{=}0$); no equivariant transport
$T(x_i,x_j)$ is orthogonal (it is
$\phi\,x_ix_j^\top$, rank one).

\paragraph{Flat Limit.}
On a connected graph, trivial holonomy $\Leftrightarrow$
$T_{ij}{=}h_ih_j^{-1}$ for frames~$h$ $\Leftrightarrow$ a gauge
with every $T{=}I$. In that gauge the forced score is
$\phi(\norm{x_i},\norm{x_j},\ip{x_i}{x_j})$: attention in one
shared frame. A standard head has transport $W_Q^\top W_K$ (score)
and $W_OW_V$ (value), constant, covariant only under the global
$\Od$, composed only through depth.

\section{Proof of Theorem~\ref{thm:update}, the cubic
coefficient, and the lensing cell}
\paragraph{Lensing cell.} Twelve lens planes with i.i.d.\ convergence
$\kappa_i\sim\mathcal N(0,0.05^2)$ and shear
$\gamma_{1,2}\sim\mathcal N(0,0.05^2)$ give Jacobians
$A_i{=}(1{-}\kappa_i)I{-}\Gamma_i$ and total distortion
$M{=}A_{12}\cdots A_1$; the label is $\mathrm{sign}\,\omega$,
$\omega{=}\tfrac12(M_{21}{-}M_{12})$. An additive emulator (sum of
per-plane features), an elementwise one (coordinate-wise products)
and a $2{\times}2$ matrix composition are each trained on $10^4$
sequences and tested on $10^3$, three seeds: $0.507/0.515/0.510$,
$0.509/0.515/0.518$, and $0.923/0.920/0.922$. Reversing the planes
maps $M\mapsto M^{\top}$ and $\omega\mapsto-\omega$ exactly, so any
order-invariant predictor sits at $\tfrac12$.

\emph{Proof of Theorem~\ref{thm:update}.} In the gauge
$(I,I,H)$ of \S\ref{sec:forced}, define $\Psi(H){:=}U(I,I,H)$.
The residual gauge freedom is the diagonal~$g$, under which
$(I,I,H)\mapsto(I,I,gHg^{-1})$ and, by equivariance,
$U\mapsto gUg^{-1}$; hence $\Psi(gHg^{-1}){=}g\Psi(H)g^{-1}$.
Undoing the gauge transformation ($g_i{=}I$, $g_j{=}T_{ij}$) gives
$U(T){=}g_i^{-1}\Psi(H)g_j{=}\Psi(H)T_{ij}$. For the form of
$\Psi$: on $\Od$, $H^\top{=}H^{-1}$,
so equivariant polynomial maps $M_d(\R){\to}M_d(\R)$ spanned by
words in $H,H^\top$ reduce to integer powers of~$H$ (the polynomial
classification is~\cite{procesi1976}); continuous maps are limits of
polynomial ones by Stone--Weierstrass on the compact group. The cubic coefficient
$-\tfrac12$ follows analytically: with $X{=}\log H{=}\Phi{+}C{+}O(A^3)$,
skew-symmetry gives $\tr X{=}\tr X^3{=}0$, so
$\tr(I{-}H){=}\tfrac12\|\Phi\|_F^2{-}\tr(\Phi C){+}O(A^4)$ and the
three cyclic commutator traces, two entering with opposite signs,
collapse to the single term; numerically the ratio of the residual
to $-\tfrac12\,\tr(A_{ki}[A_{ij},A_{jk}])$ converges to $1.000$ as
$\|A\|{\to}0$.

\end{document}